\documentclass[runningheads]{llncs}
\usepackage[T1]{fontenc}

\usepackage{graphicx}
%
%
\graphicspath{ {./images/} }
\usepackage{hyperref}
\usepackage{url}
\usepackage{subcaption}
\usepackage{booktabs, multirow} 
\usepackage{soul}

\usepackage{amsmath}
\usepackage{amsthm}
\usepackage{bm}
\usepackage{dsfont}
\usepackage{amssymb}
\usepackage{algorithm}
\usepackage[noend]{algpseudocode}
\usepackage{wrapfig}

\newcommand{\mb}[1]{\mathbf{#1}}

\newcommand{\md}[1]{\mathds{#1}}

\begin{document}
\title{PCA-Boosted Autoencoders for Nonlinear Dimensionality Reduction in Low Data Regimes}
%
%


\author{Muhammad Al-Digeil\inst{1} 
\and Yuri Grinberg\inst{1} 
\and Daniele Melati \inst{3} 
\and Mohsen Kamandar Dezfouli \inst{2}
\and Jens H. Schmid \inst{2} 
\and Pavel Cheben \inst{2} 
\and Siegfried Janz \inst{2} 
\and Dan-Xia Xu \inst{2}}
\authorrunning{Al-Digeil et al.}
%
\institute{Digital Technologies Research Centre, National Research Council of Canada,
Canada \and
Advanced Electronics and Photonics Research Centre, National Research
Council of Canada, Canada
\and
Université Paris-Saclay, CNRS, France}
\maketitle              
\begin{abstract}
Autoencoders (AE) provide a useful method for nonlinear dimensionality reduction but are ill-suited for low data regimes. Conversely, Principal Component Analysis (PCA) is data-efficient but is limited to linear dimensionality reduction, posing a problem when data exhibits inherent nonlinearity. This presents a challenge in various scientific and engineering domains such as the nanophotonic component design, where data exhibits nonlinear features while being expensive to obtain due to costly real measurements or resource-consuming solutions of partial differential equations.

To address this difficulty, we propose a technique that harnesses the best of both worlds: an autoencoder that leverages PCA to perform well on scarce nonlinear data. Specifically, we outline a numerically robust PCA-based initialization of AE, which, together with the parameterized ReLU activation function, allows the training process to start from an exact PCA solution and improve upon it. A synthetic example is presented first to study the effects of data nonlinearity and size on the performance of the proposed method. We then evaluate our method on several nanophotonic component design problems where obtaining useful data is expensive. To demonstrate universality, we also apply it to tasks in other scientific domains: a benchmark breast cancer dataset and a gene expression dataset. 

We show that our proposed approach is substantially better than both PCA and randomly initialized AE in the majority of low-data regime cases we consider, or at least is comparable to the best of either of the other two methods.

\end{abstract}

\section{Introduction}
Principal Component Analysis (PCA) \cite{pca} is a classical dimensionality reduction approach with provably optimal performance for linearly dependent data under mild conditions \cite{tipping1999probabilistic}. Its efficiency clearly suffers  when data exhibits curvature, as PCA cannot distinguish between nonlinear structure and lack of structure in data. In many domains where dimensionality reduction is desired, the data is inherently nonlinear and therefore nonlinear methods are in principle expected to yield superior results. 

Autoencoders are standard neural networks(NN) with the distinct feature that their inputs and outputs are of the same dimension $n$, while a smaller number of neurons, $p$, is placed in an intermediary layer,($p < n$) \cite{kramer1991nonlinear}. The result is that the layers preceding the bottleneck act as an \textit{encoder} and the layers proceeding it are the \textit{decoder} for the reduced representation in the bottleneck. As such, the size of the bottleneck controls the degree of dimensionality reduction achieved. 

While autoencoders are well-suited for nonlinear dimensionality reduction, their typical data requirements are significantly larger than PCA when trained with randomly initialized weights. Yet, it is well known that the behaviour of PCA with $p$ components can be modeled using an autoencoder with a bottleneck of the same number of neurons and linear activation functions \cite{baldi1989neural}. In this paper, we explore the use of PCA to initialize the weights of an autoencoder in a provably numerically stable way. Specifically, we propose PCA-Robust autoencoders (PCA-Robust) which, given a PCA dimensionality reduction to $p$ dimensions, are initialized with PCA-derived weights with linear activation functions and a bottleneck of size $p$ thus replicating the PCA behaviour. The activation function is then allowed to change smoothly to a nonlinear one as part of the normal training process, enabling the network to capture nonlinear behavior to the extent that the data supports it. 

There have been efforts to use PCA initialization of autoencoders \cite{seuret}. However, previous work differs from our study in three important ways: (i) they add a nonlinear single layer to the NN to approximate PCA (linear) results, while we initialize linear multilayer NN to replicate PCA and then allow the training to smoothly break the linearity of NN by learning the activation functions (ii) their primary interest is to speed up a slow training process rather than our concern for data scarcity (iii) they focus predominantly on images which present the autoencoder with much larger vectors and data sizes than we consider here. 

Various other  nonlinear dimensionality reduction methods exist such as t-SNE \cite{hinton2002stochastic}, Kernel PCA \cite{scholkopf1997kernel}, Locally Linear Embedding \cite{roweis2000nonlinear}, Isomap \cite{tenenbaum2000global},  etc. (see\cite{gorban2008principal}). However, in this study we focus on autoencoders for several reasons. First, they provide a seamless way of reconstructing data from the reduced to the original space, a necessary requirement of its application to nanophotonic design. Indeed, this reconstruction is an integral part of the learning process. Second, autoencoders offer a built-in way to map out-of-sample points on the lower dimension. This paves the way for developing adaptive data acquisition schemes that minimize the number of samples required to achieve a desired level of accuracy. Finally, Neural Network implementations are widely available, making their use easily accessible to communities outside the research domain.

We present the methodology as well as the algorithmic details of the robust PCA-based initialization, accompanied by the correctness and stability proofs. To study the impact of the proposed approach we first analyze its performance on carefully designed synthetic data where we can control the degree of nonlinearity. We then follow up with a variety of applications coming from the nanophotonic component design domain as well as breast cancer data and gene expression data. Our results demonstrate that there exists a synergy between linear and nonlinear methods in that the proposed approach improves upon both baselines in most low data regimes while being comparable to the best performing method in all other situations.

\section{PCA-Boosted Autoencoders}

A central piece of our method is the use of the Parametric ReLu (PReLu, figure \ref{fig:prelu}) activation function for the Autoencoder \cite{he2015delving}. While the original motivation for this activation function was to improve the training process in large networks, we adopt if for different reasons. PReLU allows for the smooth transition from linear to nonlinear function approximation \cite{maas2013rectifier}. When we manually set the slope for the negative values of ReLU to 1, then the network is perfectly linear and in principle cannot do better than PCA for dimensionality reduction. In fact, emulating PCA is the best this network can do since PCA gives the optimal linear dimensionality reduction under the mean squared error loss function. The amount by which the slope for the negative values is adjusted becomes another parameter to be estimated from data  \cite{he2015delving}.


We assume that the architecture of the AE takes the vase-shaped form shown in Figure \ref{fig:AE}, where first the data is expanded in its dimensionality and then reduced back to the original dimension $n$ before being reduced to the bottleneck dimension $q$.  Although not necessary, it makes the robust weights initialization procedure easier to implement and present, while still allowing a significant degree of flexibility in the choice of the architecture. 



For the $n$-dimensional signal, let $\mb X$ be the original data matrix with $n$ columns (features) and $m$ rows (samples). We assume that $\mb X$ is already centered and possibly scaled, depending on the application needs. Let $ \hat{\mb X}$ be the result of representing $\mb X$ using just the first $q$ linear principal components. That is, 
$$ \hat{\mb X} = \mb U^{m \times q} \cdot \mb S^{q \times q} \cdot {\mb V^{n \times q}}^\top, $$
being a rank$-q$ singular value decomposition of matrix $\mb X$.

The functionality of a linear Autoencoder as in Figure \ref{fig:AE} with the bottleneck dimension $q$ can be described by the following equation:
\begin{align*}
    \mb x^\top \cdot \mb W_{e_1}^{n \times \cdot} &\cdots \mb W_{e_i}^{\cdot \times n} \cdot \mb W_{enc}^{n \times q} \cdot \mb W_{dec}^{q \times n} \cdot \mb W_{d_1}^{n \times \cdot} \\
    &\cdots \mb W_{d_i}^{\cdot \times n} = \hat{\mb x}^\top
\end{align*}
where $\mb W$-s are the weights of the AE. We seek to generate the weights such that the following equality holds:
\begin{align}
    \mb X \cdot \mb W_{e_1}^{n \times \cdot} &\cdots \mb W_{e_i}^{\cdot \times n} \cdot \mb W_{enc}^{n \times q} \cdot \mb W_{dec}^{q \times n} \cdot \mb W_{d_1}^{n \times \cdot} \nonumber \\
    &\cdots \mb W_{d_i}^{\cdot \times n} = \hat{\mb X}, \label{eq:PCAtoAE}
\end{align}
implying that the AE acts as PCA. 

The above equation implies that there is no unique solution to the initialization of weights due to both larger number of degrees of freedom (parameters) as well as scaling. This creates different possibilities for the initialization. We consider two options, both allowing a degree of randomness. The simplified version initializes all the layers randomly except for the bottleneck \footnote{Any random distribution is acceptable as long as it generates full rank matrices with probability 1.}, then computes the bottleneck layers (last encoder layer and first decoder layer) to match PCA. Let $\mb W_{enc-} = \prod_j \mb W_{e_j}$ and $\mb W_{dec+} = \prod_j \mb W_{d_j}$ be the weights corresponding to the product of all the encoder layers' weights except for the bottleneck, and the product of all the decoder layers' weights except for the bottleneck respectively. While $\mb W_{enc-}$ and $\mb W_{dec+}$ are products of random matrices, the bottleneck layer weights are calculated as follows:
$$ \mb W_{enc} = \mb W_{enc-}^\dagger \cdot \mb V ; ~~ \mb W_{dec} = {\mb V}^\top \cdot \mb W_{dec+}^\dagger,$$
where $\dagger$ is the Moore--Penrose pseudo-inverse of a matrix. It is easy to see that Eq. \eqref{eq:PCAtoAE} holds as a result. We call this procedure PCA-Naive.

The second, more involved initialization option maintains the numerical stability of the data throughout the network, and is outlined in Alg. \ref{alg:pca_robust} with its main step shown visually in Fig \ref{fig:robustinit}. The key difference is the way the weights for the non-bottleneck layers are initialized. Each such layer acts as a linear operator that is neither contractive nor expansive with respect to the original input space.

\begin{figure}[!htb]
  \centering
  
  \begin{minipage}{0.48\textwidth}
    \includegraphics[width=\columnwidth]{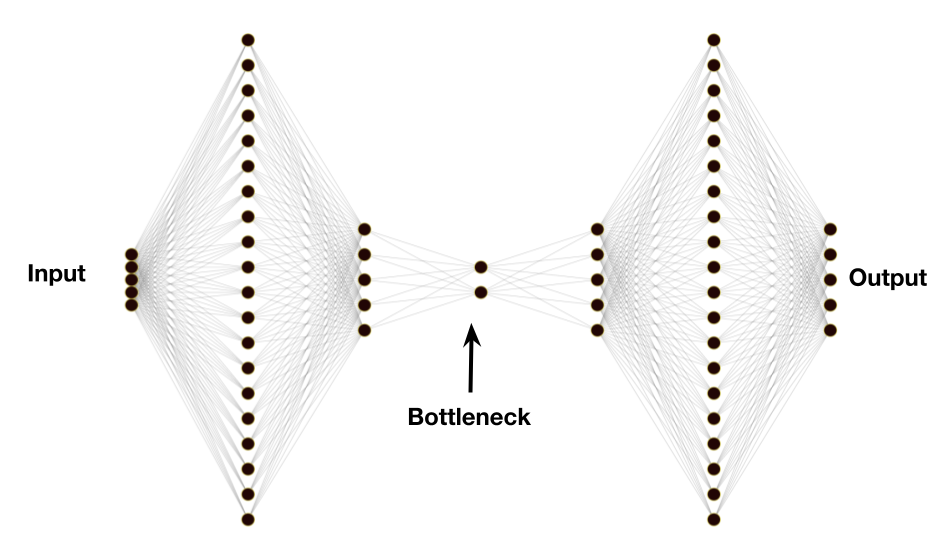}
    \caption{Autoencoder architecture used in experiments}
    \label{fig:AE}
    \end{minipage}
  \begin{minipage}{0.36\textwidth}
    \includegraphics[width=\columnwidth]{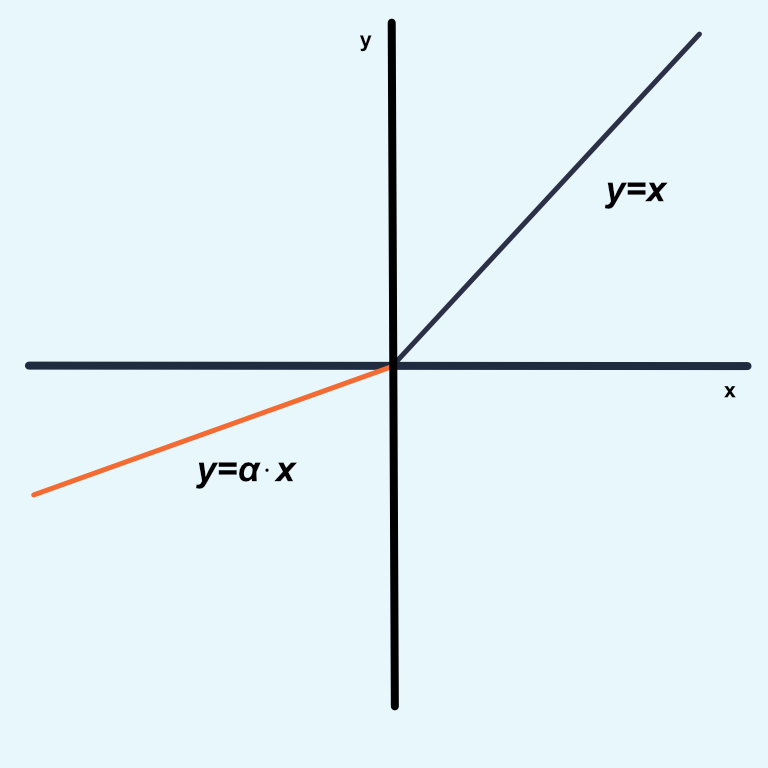}
    \caption{Behaviour of PReLu, during training, $\alpha$ is initially equal to 1 but is allowed to vary independently for each node}
    \label{fig:prelu}
    \end{minipage}
\end{figure}

\begin{figure}
\centering
    \includegraphics[width=0.8\linewidth]{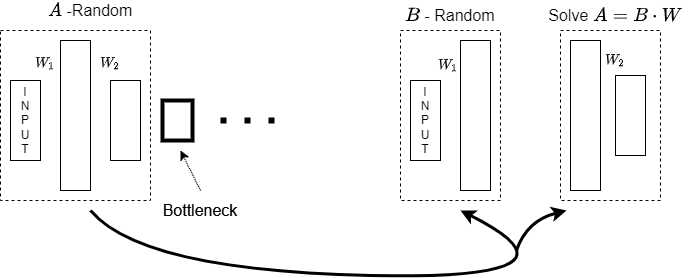}
    \caption{Example schematic of the stable initialization procedure of the encoder part of the AE described in Alg. \ref{alg:pca_robust}. $\mb A = \mb W_1 \cdot \mb W_2$, $\mb B = \mb W_1$ and $\mb W = \mb W_2$. $\mb A$ and $\mb B$ are random orthogonal matrices and $\mb W$ is a solution to Eq. \eqref{eq:robustInit}.\label{fig:robustinit}}
\end{figure}

\begin{lemma}
Let the Autoencoder weights be initialized as described in Alg. \ref{alg:pca_robust}. Then: 
\begin{enumerate}
    \item Correctness of the non-bottleneck layer initialization: line 15 in Alg. \ref{alg:pca_robust} is an exact solution to the following equation with an unknown $\mb W$:
\begin{equation}
    \mb A = \mb B \cdot \mb W \label{eq:robustInit}
\end{equation}
    \item Stability: for any input $\mb x \in \md R^n$ to the Autoencoder, the output at each layer before the bottleneck layer is norm preserving. Similarly, the output of the layers coming after the bottleneck layer is norm preserving. 
\end{enumerate} 
\end{lemma}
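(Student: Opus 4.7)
The plan is to address the two parts of the lemma separately, both of which reduce to the defining property of the Moore--Penrose pseudo-inverse when applied to orthogonal matrices.

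For the correctness claim, I would first identify line 15 of Alg.~\ref{alg:pca_robust} as a formula of the form $\mb W = \mb B^\dagger \mb A$, which in the present setting coincides with $\mb B^\tp \mb A$ because $\mb B$ is drawn as a random orthogonal matrix of the appropriate shape. Substituting into the right-hand side of \eqref{eq:robustInit} yields
\[
\mb B\, \mb W \;=\; \mb B\, \mb B^\dagger\, \mb A .
\]
The key observation is that $\mb B\, \mb B^\dagger$ is the orthogonal projection onto the column space of $\mb B$. Since the algorithm samples $\mb B$ so that its column space covers the ambient space in which the columns of $\mb A$ live (both being generated as random orthogonal matrices of matching target shape), $\mb B\, \mb B^\dagger$ acts as the identity on $\mb A$, and $\mb A = \mb B\, \mb W$ holds exactly.

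For the stability claim, I would proceed by induction on the layer index to show that every partial product $\prod_{j=1}^{k} \mb W_{e_j}$ of the pre-bottleneck encoder weights, viewed as a linear map from $\md R^n$ into the current layer's space, is norm-preserving. The base case is immediate since $\mb W_{e_1}$ is sampled as a random orthogonal matrix, which preserves the $\ell_2$-norm of any $\mb x \in \md R^n$. For the inductive step, the correctness part just proved ensures that the next weight factor is constructed so that the new partial product coincides with a freshly sampled orthogonal matrix $\mb A$, and therefore preserves the norm. Because PReLU is initialized with $\alpha = 1$ (so the nonlinearity acts as the identity at the start of training), this purely linear analysis applies directly to the actual forward pass at initialization. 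The argument for the post-bottleneck decoder side is entirely symmetric.

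The main obstacle I anticipate is the careful bookkeeping of rectangular shapes through the vase-shaped architecture: inputs are $n$-dimensional, intermediate layers are wider, and the bottleneck is smaller. In particular, I must verify that ``orthogonal matrix'' is interpreted as ``orthonormal rows'' along the widening phase (so that $\mb B\, \mb B^\tp = \mb I_n$ and $\mb B^\tp$ serves as a right-inverse) and as ``orthonormal columns'' along the contracting phase (so that $\mb B^\tp\, \mb B = \mb I_n$ and $\mb B^\tp$ serves as a left-inverse). Once the conventions are fixed at each stage, the pseudo-inverse collapses to the transpose and both correctness and stability reduce to one-line matrix identities.
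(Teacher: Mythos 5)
Your proof is correct and follows essentially the same route as the paper: correctness reduces to $\mb B\,\mb B^\dagger = \mb I_n$ for a random orthonormal $\mb B \in \md R^{n\times p}$ with $p \ge n$ (guaranteed by the vase-shaped architecture), and stability follows because each partial product of pre-bottleneck (resp.\ post-bottleneck) weights telescopes to one of the random orthonormal matrices generated during the recursion. Your explicit induction and the remark on row- versus column-orthonormality merely formalize what the paper states as ``by construction.''
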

\begin{proof}
1. Let $\mb B \in \md R^{n \times p}$, with $p \ge n$ following the assumption about the AE architecture. Together with the fact that $\mb B$ is orthonormal we get $\mb B \mb B^\dagger = \mb I^{n \times n}$. Hence,
$ \mb B \cdot \mb W = \mb B \cdot \mb B^\dagger \mb A = \mb A$.\\
2. Note that the output at an encoder layer $e_k$ is $\mb x^\top \left(\prod_{j=1}^k \mb W_{e_j}\right)$. By construction in the Alg. \ref{alg:pca_robust}, $\mb A \triangleq \prod_{j=1}^k \mb W_{e_j} \in \md R^{n \times \cdot}$ corresponds to a random orthonormal matrix generated at a recursive iteration $(i-k)$. Since $n$ is the smaller dimension of the matrix, $||\mb x^\top \mb A|| = ||\mb x||$. An identical argument holds for the decoder layers coming after the bottleneck layer since it is generated the same way.
\end{proof}

\begin{algorithm}
\caption{Stable PCA-based weights initialization}\label{alg:pca_robust}
\textbf{Parameters:}\\
$\mb X \in \md R^{m \times n}$ - centered (optionally scaled) data matrix \\
$k$ - bottleneck dimension\\
$arch$ - list of dimensions for encoder/decoder layers\\
\textbf{Output:} List of weights\\
\begin{algorithmic}[1]
\State $\mb A = $\Call{RO}{$n,n$}
\State $\mb {Enc} = $\Call{RWI}{$\mb A,arch$}
\State $\mb W_{enc-} = \prod \mb {Enc}$ 
\State $\mb {Dec} = $\Call{RWI}{$\mb A,arch$}
\State $\mb W_{dec+} = \prod \mb {Dec}$ 
\State $\mb P = \Call{PCA}{\mb X,k}$
\State \Return List($\mb {Enc},\mb W_{enc-}^\dagger \mb P,\mb P^\dagger \mb W_{dec+}^\dagger,\mb {Dec}$)
\State
\State \#\# Robust weight initialization
\Function{RWI}{$\mb A \in \md R^{m\times n}, arch:$ List} 
\If{size of ($arch$) $\le$ 2}
\State \Return $\mb A$
\EndIf
\State $arch.pop()$ \Comment drop last element
\State $\mb B = $ \Call{RO}{$arch[0],arch[-1]$} \Comment -1 $\rightarrow$ last index
\State $\mb W = \mb B^\dagger \cdot \mb A$
\State \Return List(\Call{RWI}{$\mb B$,$arch$},$\mb W$)
\EndFunction
\State
\State \#\# Random orthonormal matrix
\Function{RO}{$m ,n$}
\State \Return random orthonormal matrix $ \in \md R^{m \times n}$
\EndFunction
\end{algorithmic}
\end{algorithm} 
The computational complexity of Alg. \ref{alg:pca_robust} is dominated by the time it takes to solve the largest linear system \eqref{eq:robustInit} or generation of random orthogonal matrices, depending on the approach. The upper bound on both is $O(l^3)$ where $l$ is the number of hidden units in the largest layer of AE, if singular value decomposition is used. This might be an obstacle for extremely large architectures, though such circumstances are unlikely due to the inherent scarcity of training data in the domains of interest we consider. In our use cases the time it takes to initialize the AE is negligible compared to the training time.

\section{Experiments}
The design of the experiments is intended to replicate a low-data regime with the downstream tasks requiring the choice of a single dimensionality reduction model that is later explored. The exploration of the latent space is however outside the scope of this work. The total available data in our experiments is split into $80\%$ for training, $10\%$ for validation, and $10\%$ for model selection, where the validation set is used to identify when to stop the training process. The model selection set is used to choose the best performing AE model among several randomly initialized and trained AE models. The same model selection is also applied to the AE models that reproduce PCA since this process allows significant degree of randomness as well. All final models are evaluated on a much larger testing set for analysis purposes.




Training is performed using the Adam optimizer with the slope of PReLU activation function for all the layers initialized to 1. Along with the PCA, the compared AE models are the PCA-Robust initialization (Alg. \ref{alg:pca_robust}), PCA-Naive initialization and Random initialization. Although it was tracked during the experiments, PCA-Naive is not depicted in the charts because it performed poorly---consistently worse and significantly less stable than the other methods. The Euclidean distance between the input and output vectors is used as the loss function, consistent with an implicit objective of PCA.

All autoencoder architectures we experiment with have 7 layers including the input and output layers. The choice of this architecture is somewhat arbitrary and was not optimized for any of the problems. 



Both synthetic and real experiments are repeated with different sample sizes (20, 30, 40, 80 and 100) representing varying sizes of the low data regimes. 

\subsection{Power Function}
In our first experiment the data is generated from a function that takes two parameters, satisfying:

\[
  x^n + y^n = z \label{eq:power} \tag{1}
\]

As shown in figure \ref{fig:ae_plots} , adjusting $n$ allows for the manipulation of the degree of curvature of the surface while maintaining the same range of a function for a fixed domain $x,y \in [0,1]$. As $n$ approaches $1$ the surface approximates a linear surface more closely.

In this experiment we study the effect of curvature and sample size on the respective performances of PCA, a Randomly-initialized AE and two AE's: one initialized with our stable algorithm (PCA-Robust), and one initialized naively without consideration for the stability issues discussed in the previous section (PCA-Naive). We vary the sample sizes (using 20, 30, 40, 50, 80 and 100) and different values for $n$ in equation \eqref{eq:power} ($n$= 1.1 ,nearly linear \ref{fig:ae_plots}(left), and, 4  clearly curved \ref{fig:ae_plots}(right)). Altogether, these values cover a range of curvature and data availability of the latent 2D subspace.

1000 points are randomly generated for each value of $n$ and from these 250 are set aside for the test set. The rest are used to draw samples from randomly. For each  data sample, the AEs were each initialized accordingly. All three autoencoders (PCA-Robust, PCA-Naive and Random) undergo the same training process on the given data sample. Finally, the experiment is repeated 200 times (resulting in different data samples) and the statistics are presented.

For the synthetic experiments, the layer sizes in sequence from the input layer to the output layer are 3-20-3-2-3-20-3, based on 3-dimensional data.

  
    
  

\begin{figure}[!tbp]
  \centering
  \begin{minipage}[b]{0.48\textwidth}
    \includegraphics[width=\columnwidth]{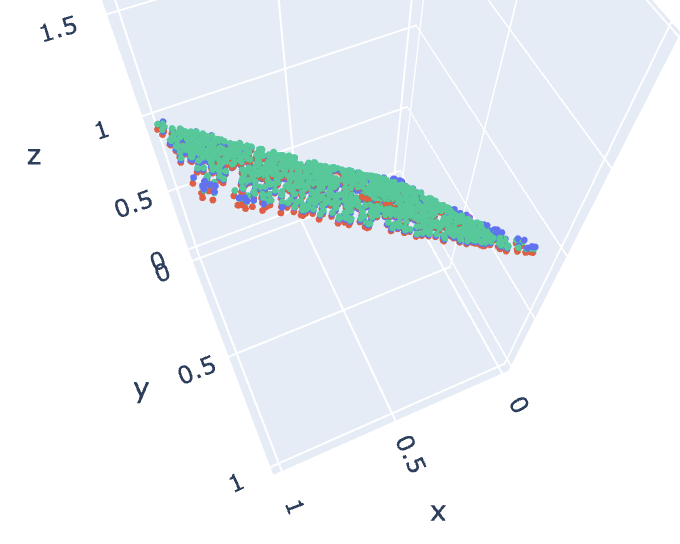}
    \label{fig:nearly-linear}
  \end{minipage}
  \hfill
  \begin{minipage}[b]{0.48\textwidth}
    \includegraphics[width=\columnwidth]{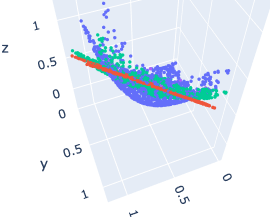}
  \end{minipage} 
  
  \caption{Randomly generated points (blue) satisfying \(x^n + y^n = z\) when $n=1.1$(left) and $n=4$(right) along with PCA projections(red) and PCA-Robust(green) for 40 training samples}
 \label{fig:ae_plots}
\end{figure}

Figure \ref{fig:results} shows the average errors of different models evaluated on the test data (size=250) as a function of the training data size. The results show that PCA-Robust performs significantly better than PCA for all cases. The result for Random for $n=1.1$ are not shown as they are significantly worse.

Moreover, when curvature is increased, similar performance patterns hold and all AE-based methods begin to do better than PCA for larger data sizes as expected. Most importantly, PCA-Robust AE consistently outperforms other models in most data regimes and at least as well as any other model otherwise.

\begin{figure}[h]
  \begin{tabular}{c c}
    \includegraphics[width=0.5\textwidth]{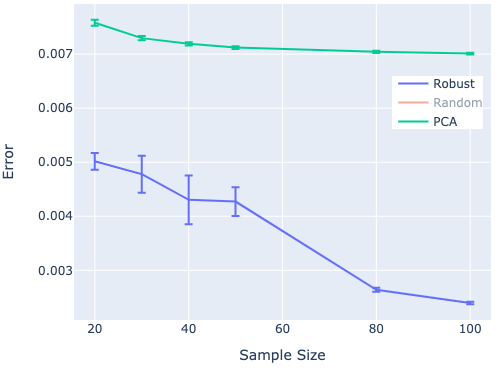} &
    \includegraphics[width=0.5\textwidth]{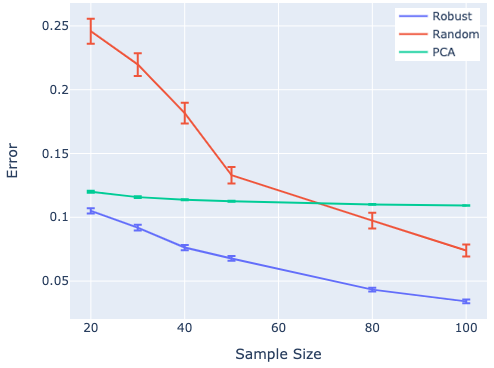}
  \end{tabular}
  \caption{{Comparison of dimensionality reduction techniques on a synthetic dataset generated with curvature 1.1 (Left) and 4 (Right). \label{fig:test_synth}}}
  \label{fig:results}
\end{figure}







\subsection{Nanophotonic Component Design}

Here we present several real-world problems that motivated this work in the first place, and illustrate the practical utility of dimensionality reduction with limited data. In nanophotonic component design, the geometry  on a chip are manipulated on a nanometer scale to generate functions for applications ranging from optical communication to biological sensing \cite{cheben2018subwavelength}. Designing such devices requires solving Maxwell’s partial differential equations to obtain the electromagnetic field distribution---a computationally expensive process. Such design problems are typically posed as optimization problems where parameters represent physical quantities such as material properties and/or geometry. The introduction of an adjoint based optimization method in nanophotonic design \cite{lalau2013adjoint,jensen2011topology} allowed an efficient gradient-based optimization that easily scales up to a large number of parameters. Acquiring a single optimized design typically takes hours to days of computation depending on the problem and computational resources.  However, in practice a single optimized design is often not the best one to be fabricated. The preferred course of action is to present a collection of optimized designs among which the designer chooses one or a handful to manufacture, based on a variety of considerations, such as other figures of merit or fabrication reliability, that are not captured in the objective function. 

In \cite{melati2019mapping,dezfouli2020perfectly,melati2020design}, PCA was used to characterize a subspace of optimized designs from a small collection of such designs. Exploring this lower dimensional subspace became computationally feasible by simple sampling. While the linear approach was useful, the data also exhibited some curvature \cite{melati2019mapping}. Using a method that can capture the lower dimensional subspace more accurately, including its curvature, implies that exploring this subspace will identify a collection of yet better performing designs to consider for manufacturing. Here we consider three design problems - two grating couplers and a power splitter, as described below - where we demonstrate that the proposed autoencoders can capture some of the curvature of the design space and generalize well despite having limited data. \\

\begin{figure}[h]
  \begin{tabular}{c c}
    \includegraphics[width=0.4\textwidth]{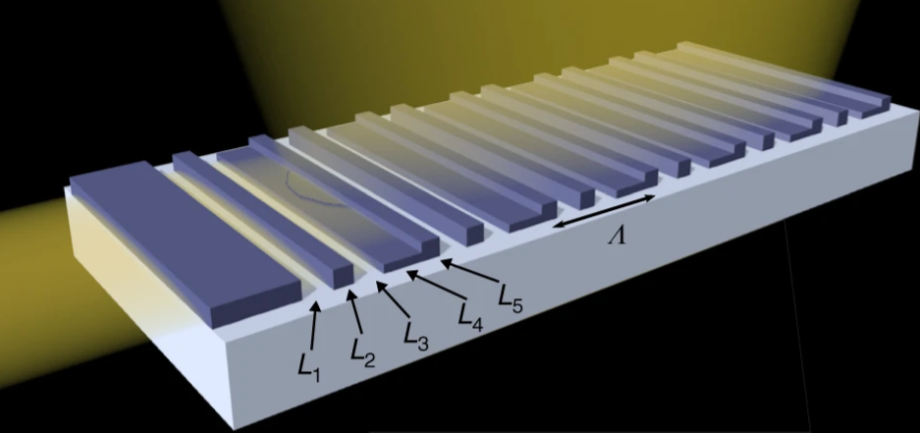} &
    \includegraphics[width=0.56\textwidth]{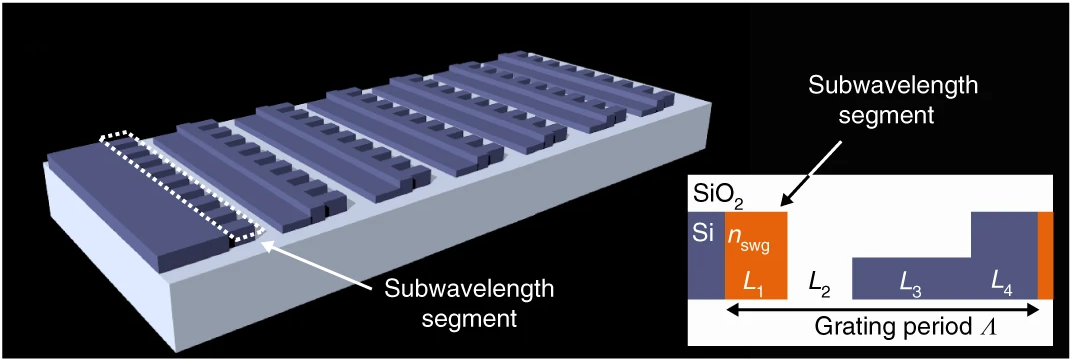}
  \end{tabular}
  \caption{Schematic representations of grating couplers' structures: Left - Grating Coupler 1; Right - Grating coupler 2. (reproduced with permission from \cite{melati2019mapping}). \label{fig:couplers}}
\end{figure}

\noindent \textbf{Grating Coupler 1}\\


A vertical grating coupler is a device that diffracts light injected in-plane vertically upwards, or coupling light signals in the photonic chip to and from optical fibers, or to free space input and output beams. It is a necessary device for connecting photonic chips to the surrounding environment. A schematic of the device that is considered here is shown in Figure \ref{fig:couplers} (Left).

The complete dataset of optimized structures for this vertical grating coupler consists of 540 designs, also referred to as a set of good designs. These were obtained from computationally intensive simulation-based optimizations and selected from a set consisting of more than 30,000 candidate designs based on an optical performance criterion (a waveguide-fiber coupling efficiency of at least 74\%). The designs are characterized by five segment values ($L_1$ to $L_5$) representing different widths of silicon blocks, which are the parameters to the design problem (see Figure  \ref{fig:couplers} (Left)). In \cite{melati2019mapping}, it was shown that two principal components were enough to capture most of the optimized design subspace within this five dimensional design problem. Only a small set of such designs were required to identify the subspace. 
The layer sizes for this experiment were 5-20-5-2-5-20-5 (see Figure \ref{fig:AE}) for the 5-dimensional data. Note that the size, $n$, of the bottleneck for the experiments is compared to PCA with $n$ principal components as in \cite{melati2019mapping}.


\begin{figure}[ht]
  \begin{tabular}{l r}
    \includegraphics[width=0.5\textwidth]{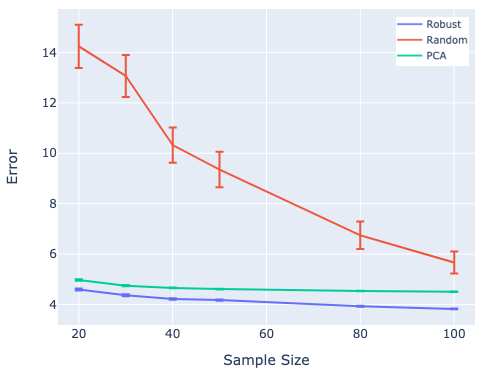} &
    \includegraphics[width=0.5\textwidth]{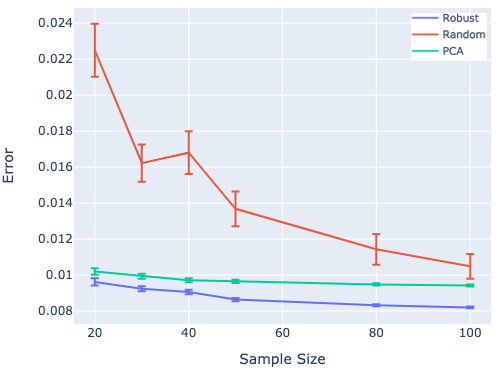}
  \end{tabular}
  \caption{Comparison of dimensionality reduction techniques on Grating Coupler 1 (Left) and Grating Coupler 2 (Right) datasets. Robust(Blue) depicts the average projection error of PCA-Robust AE, Random (Red) is of Randomly initialized AE, and PCA (Green) is using PCA. \label{fig:couplers_performance}}
\end{figure}

Similar to the setup in \cite{melati2019mapping}, we choose 2 as a reduced dimension and simulate scenarios where only a small number of these optimized designs are available, ranging from 20 to 100. These are randomly sampled from the available good designs set. To corroborate our entire experimental setup that can be sensitive to data partitioning due to the low data regime, we set aside half of the available designs designs (270) as the test set. The experiments are further repeated 50 times creating different data splits to measure accurate statistics of the results. The error bars represent the uncertainty of an average estimate.

Figure \ref{fig:couplers_performance} (Left) shows the performance of different methods evaluated on the test set. We notice that our proposed Robust model consistently outperforms randomly-initialized autoencoders as well as PCA across all dataset sizes. As expected, the performance of randomly-initialized autoencoders slowly catches up and is expected to match the performance of Robust model when sufficient data is provided.\\

\noindent \textbf{Grating Coupler 2}\\


A similar experiment was conducted on data obtained from another vertical grating coupler design, as depicted in Figure \ref{fig:couplers} (Right) Compared to the previous example, here a section based on a subwavelength metamaterial is introduced in the device structure, as described in \cite{melati2019mapping,dezfouli2020perfectly}. While the number of variables defining the structure is also five - $(L_1,\cdots,L_4,n_{swg})$, the variable $n_{swg}$ represents a different physical quantity - the effective material index of the subwavelength metamaterial section - which has a different interpretation as well as a different order of magnitude\cite{cheben2018subwavelength}. The other four variables, as previously, represent silicon segment widths.

The complete dataset consists of 1502 optimized designs (coupling efficiency >$74\%$) of which half were used as the test set. Since the variables in this design problem are of different magnitudes, the values were scaled for all dimensionality reduction methods. The structure of the experiment is otherwise identical to the previous grating coupler design setup.

The performance results are shown in Figure \ref{fig:couplers_performance} (Right). Trends very similar to those  of the Grating Coupler 1 experiment are observed.\\

\noindent \textbf{Power splitter}\\

Another nanoponotonic component ubiquitous in integrated circuits is a power splitter. Its role is to split the incoming wave carrying the signal into two or more ports propagating the same signal but with reduced intensity. We consider a parameterized design of a 1x2 power splitter as in \cite{melati2020design} targeting equal splitting ratio on the output ports . The splitter is defined by a single silicon shape, the parameters directly controlling its boundary. Examples of such optimized splitters are shown in Figure \ref{fig:splitters}. Ten parameters define the boundary at equally spaced location along the X axis, while the boundary points in between them are interpolated and smoothed.

The Y-Splitter data is 10-dimensional and we compare different dimensionality reductions, specifically, 4 and 5. This resulted in layer sizes of 10-20-10-4-10-20-10 and 10-20-10-5-10-20-10 respectively.

The complete dataset consists of 645 designs that achieve at least 97\% coupling efficiency, defined as the amount of power coupled into the output ports (since the device is symmetric the power is guaranteed to be split equally). The optimized structures are obtained from hundreds of gradient based optimizations with random initial conditions. Figures \ref{fig:ysplit4} and \ref{fig:ysplit5} represent the performance results on this dataset for two different choices of the reduced dimension. Similar to previous experiments,  PCA-Robust method outperforms its competitors except for PCA at the smallest dataset regime, where their performance is comparable.

An interesting outcome of the reduction showed in these two experiments is that we observe that with 100 samples, the average projection error for PCA when reduced to 5 dimensions ($108\pm 0.7$) is marginally better than that of PCA-Robust reduced to 4 ($114\pm1.4$). First, it suggests that PCA needs more dimensions to capture approximately the same amount of information as a nonlinear method -- a sign of an increasingly curved subspace. Second, exploring a lower-dimensional subspace will often require (significantly) less data samples. It is particularly important in applications such as this one, where sampling is costly.

\begin{figure}[h]
\begin{center}
  \begin{tabular}{c c}
    \includegraphics[width=0.42\textwidth]{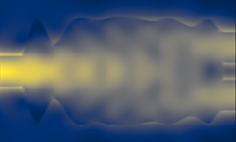} \hspace{10pt}&
    \includegraphics[width=0.415\textwidth]{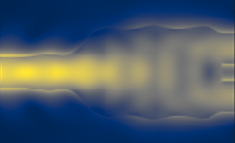}
  \end{tabular}
  \caption{Examples of two optimized 1x2 power splitters. The light travels from the left port (waveguide) and being split equally into two output ports (waveguides) on the right. Both power splitters are defined by their (symmetric) boundaries at the top and bottom representing the silicon shape. Yellow color represents the electric field intensity profile. \label{fig:splitters}}
\end{center}
\end{figure}

\begin{figure}[!htb]
  \centering
  \begin{minipage}{0.48\textwidth}
    \includegraphics[width=\columnwidth]{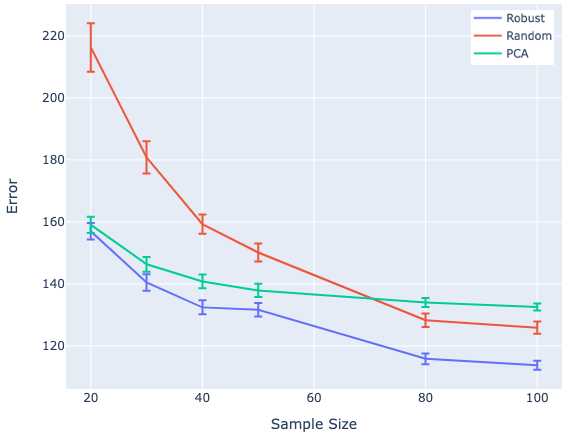}
    \caption{Performance for Y-splitter reduced to 4 dimensions}
    \label{fig:ysplit4}
  \end{minipage}
  \hfill
  \begin{minipage}{0.48\textwidth}
    \includegraphics[width=\columnwidth]{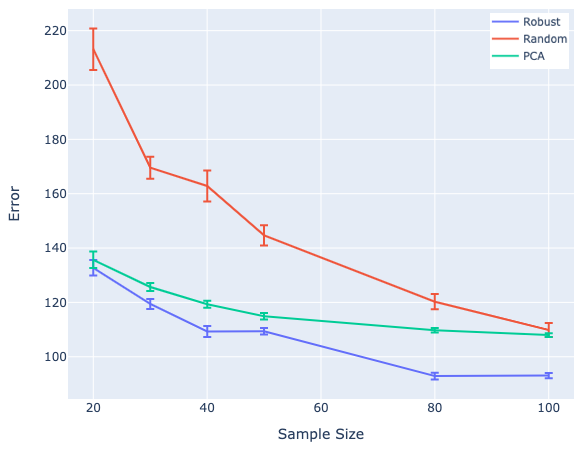}
    \caption{Performance for Y-splitter reduced to 5 dimensions}
    \label{fig:ysplit5}
  \end{minipage}
\end{figure}

\section{Breast Cancer Wisconsin Dataset}
We now describe the application of our method to a dataset from a different domain, one in which the downstream task is different. Whereas in the nanophotonics case, the purpose of performing dimensionality reduction was to enable efficient mapping of the optimized design subspace, in the breast cancer dataset dimensionality reduction is performed in order to improve the performance of a classifier. 

The Breast Cancer Wisconsin Dataset \cite{Dua:2019}, included in sci-kit learn package\cite{scikit-learn},  consists of 569 total samples labeled ``benign'' and ``malignant''. These labels are irrevelant for our purposes since we're interested in comparing the average projection error in the dimensionality reduction of PCA and PCA-Robust. Each sample contains 30 real-valued features describing charecteristics of the cell nuclei depicted in an image of a fine-needle aspirate of a breast mass. Since the feature size is 30 we had to expand the width of our vase used in our earlier experiment. Our resulting vase had a ``width'' of 100 nodes and a bottleneck of 15 nodes (to result in 15 reduced dimensions). The result layers were 30-100-30-15-30-100-30.

Figure \ref{fig:b_cancer} shows that, the PCA-Robust autoencoder outperforms other methods on all except for the smallest data set size, where its performance is comparable to that of PCA. 

\begin{figure}[!htb]
  \centering
  \end{figure}

\begin{figure}[!htb]
  \centering
  \begin{minipage}{0.48\textwidth}
    \includegraphics[width=\columnwidth]{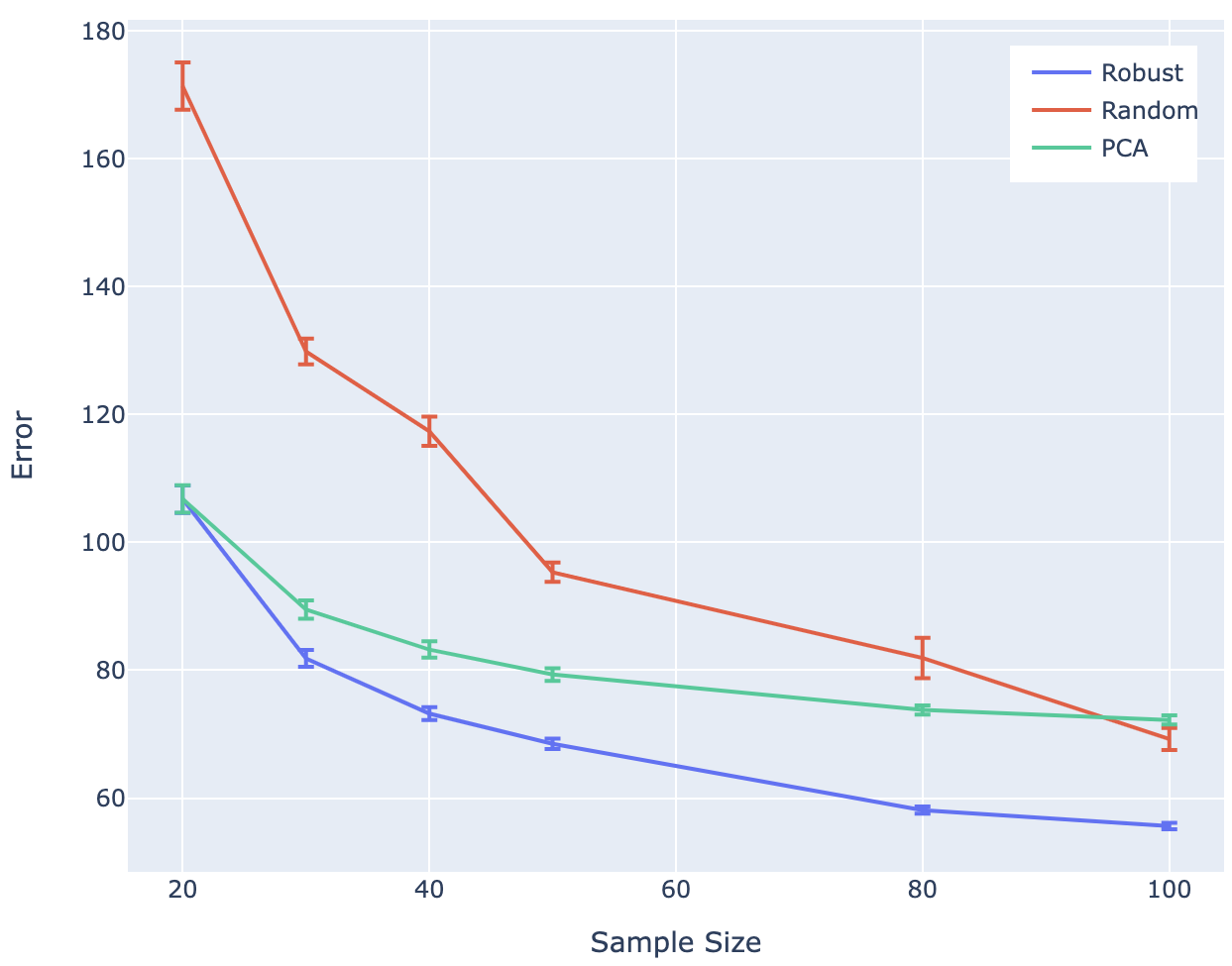}
    \caption{Performance for Breast Cancer Data reduced to 15 dimensions}
    \label{fig:b_cancer}
  \end{minipage}
  \hfill
  \begin{minipage}{0.48\textwidth}
    \includegraphics[width=\columnwidth]{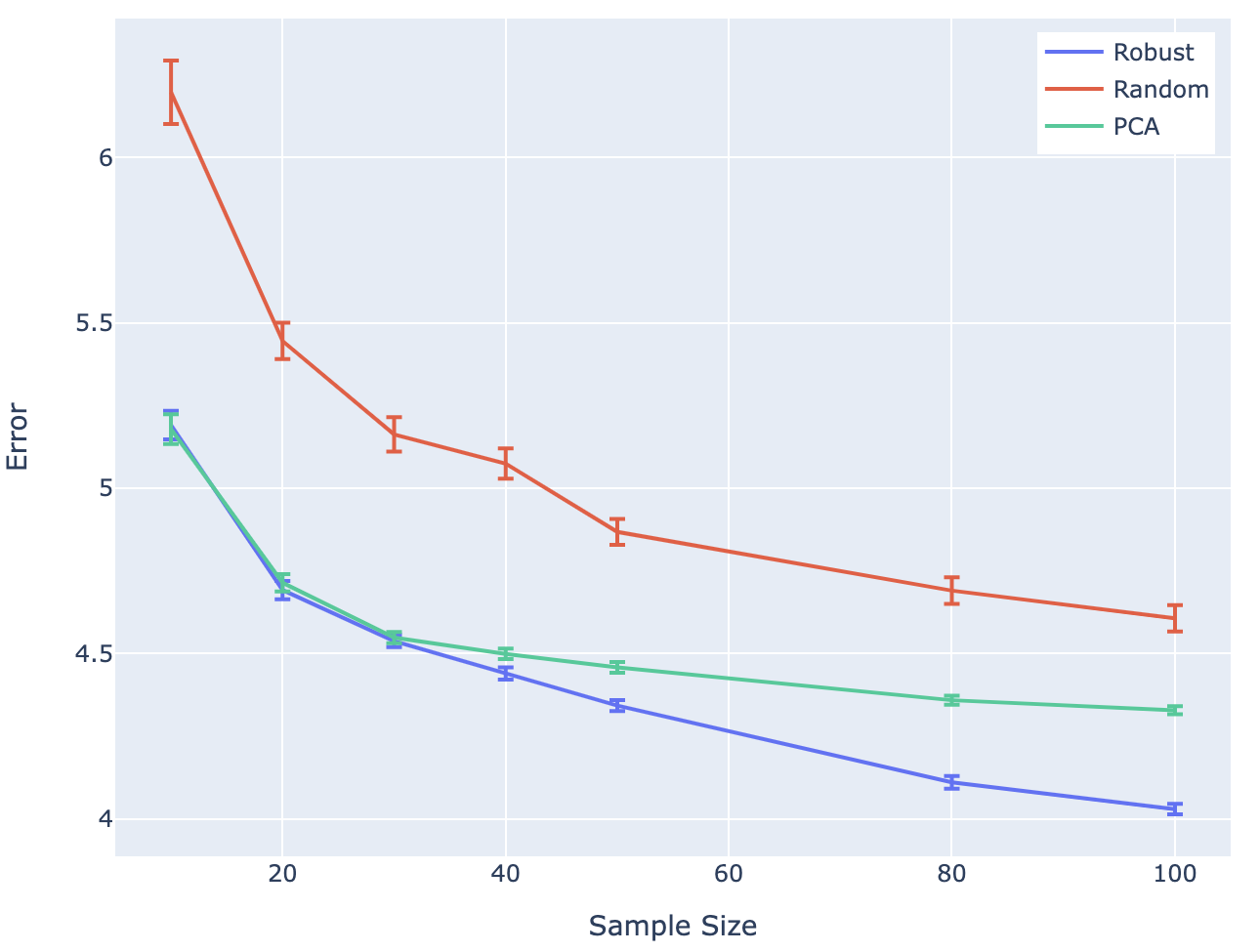}
    \caption{Performance for the cotton dataset}
    \label{fig:cotton}
  \end{minipage}
\end{figure}

\section{Gene Expression}
Another application that often offers only limited data where PCA is  used is gene expression. We consider a subset of gene expression data associated with varying conditions of fungal stress in cotton  \cite{bedre2015genome}. The data contains 662 samples. Each sample has six features, each is real-valued and indicates a condition associated with fungal stress in cotton. PCA is applied to this data to enable the visualization of the relationships between samples and thus a reduction that preserves more information(lower average projection error) can proivide a clearer representation of these relationships. 

In our experiments we reduce the samples to two dimensions. The autoencoder layers were 6-20-6-2-6-20-6.

Figure \ref{fig:cotton} shows the comparison of the three methods. We can observe that PCA-Robust autoencoder maintains the same performance as PCA for a number of data sizes, suggesting that the amount of data is not yet sufficient to pick up any nonlinearity that generalizes well. For larger data sizes, there is sufficient data to improve upon the linear method and PCA-Robust approach leverages that information.

\section{Conclusion and Future Work}
In this work we demonstrated that autoencoders, with proper initialization, can offer a viable solution for dimensionality reduction even in a regime of limited data. Our results show that there is no need to choose between linear or nonlinear model fitting on small datasets. Instead, the stable PCA initialization provides the best of both worlds, allowing the AE training to introduce nonlinearity only to the extent that the data allows. These results are encouraging in domains where only PCA has been used to reduce the dimensionality due to very limited availability of datasets. Furthermore, due to the explicit encoding-decoding nature of the autoencoders, sampling the low-dimensional space for the purpose of applications such as nanophotonic component design is straightforward.

While the idea of initializing the autoencoders from PCA is not new, we demonstrate that, unlike its naive implementation, a numerically stable initialization is critical to the training and the performance of the final model. We conjecture that the stable initialization we propose turns out to be significantly more favourable for the typical gradient-based training method(s) of AE. More detailed investigation of this matter is left for future work.

Finally, initialization of the network weights in the proposed fashion might offer a degree of stability and robustness similar to the proposals that address adversarial example issues in neural networks and enforce general smoothness \cite{cisse2017parseval,anil2019sorting}. This would be another exciting avenue to explore.



%
%
%
\bibliographystyle{splncs04}
\bibliography{references}

\begin{thebibliography}{10}
\providecommand{\url}[1]{\texttt{#1}}
\providecommand{\urlprefix}{URL }
\providecommand{\doi}[1]{https://doi.org/#1}

\bibitem{anil2019sorting}
Anil, C., Lucas, J., Grosse, R.: Sorting out lipschitz function approximation.
  In: International Conference on Machine Learning. pp. 291--301. PMLR (2019)

\bibitem{baldi1989neural}
Baldi, P., Hornik, K.: Neural networks and principal component analysis:
  Learning from examples without local minima. Neural networks  \textbf{2}(1),
  53--58 (1989)

\bibitem{bedre2015genome}
Bedre, R., Rajasekaran, K., Mangu, V.R., Sanchez~Timm, L.E., Bhatnagar, D.,
  Baisakh, N.: Genome-wide transcriptome analysis of cotton (gossypium hirsutum
  l.) identifies candidate gene signatures in response to aflatoxin producing
  fungus aspergillus flavus. PLoS One  \textbf{10}(9),  e0138025 (2015)

\bibitem{cheben2018subwavelength}
Cheben, P., Halir, R., Schmid, J.H., Atwater, H.A., Smith, D.R.: Subwavelength
  integrated photonics. Nature  \textbf{560}(7720),  565--572 (2018)

\bibitem{cisse2017parseval}
Cisse, M., Bojanowski, P., Grave, E., Dauphin, Y., Usunier, N.: Parseval
  networks: Improving robustness to adversarial examples. In: International
  Conference on Machine Learning. pp. 854--863. PMLR (2017)

\bibitem{dezfouli2020perfectly}
Dezfouli, M.K., Grinberg, Y., Melati, D., Cheben, P., Schmid, J.H.,
  S{\'a}nchez-Postigo, A., Ortega-Mo{\~n}ux, A., Wang{\"u}emert-P{\'e}rez, G.,
  Cheriton, R., Janz, S., et~al.: Perfectly vertical surface grating couplers
  using subwavelength engineering for increased feature sizes. Optics Letters
  \textbf{45}(13),  3701--3704 (2020)

\bibitem{Dua:2019}
Dua, D., Graff, C.: {UCI} machine learning repository (2017),
  \url{http://archive.ics.uci.edu/ml}

\bibitem{gorban2008principal}
Gorban, A.N., K{\'e}gl, B., Wunsch, D.C., Zinovyev, A.Y., et~al.: Principal
  manifolds for data visualization and dimension reduction, vol.~58. Springer
  (2008)

\bibitem{he2015delving}
He, K., Zhang, X., Ren, S., Sun, J.: Delving deep into rectifiers: Surpassing
  human-level performance on imagenet classification. In: Proceedings of the
  IEEE international conference on computer vision. pp. 1026--1034 (2015)

\bibitem{hinton2002stochastic}
Hinton, G.E., Roweis, S.: Stochastic neighbor embedding. Advances in neural
  information processing systems  \textbf{15} (2002)

\bibitem{jensen2011topology}
Jensen, J.S., Sigmund, O.: Topology optimization for nano-photonics. Laser \&
  Photonics Reviews  \textbf{5}(2),  308--321 (2011)

\bibitem{kramer1991nonlinear}
Kramer, M.A.: Nonlinear principal component analysis using autoassociative
  neural networks. AIChE journal  \textbf{37}(2),  233--243 (1991)

\bibitem{lalau2013adjoint}
Lalau-Keraly, C.M., Bhargava, S., Miller, O.D., Yablonovitch, E.: Adjoint shape
  optimization applied to electromagnetic design. Optics express
  \textbf{21}(18),  21693--21701 (2013)

\bibitem{maas2013rectifier}
Maas, A.L., Hannun, A.Y., Ng, A.Y., et~al.: Rectifier nonlinearities improve
  neural network acoustic models. In: Proc. icml. vol.~30, p.~3 (2013)

\bibitem{melati2019mapping}
Melati, D., Grinberg, Y., Dezfouli, M.K., Janz, S., Cheben, P., Schmid, J.H.,
  S{\'a}nchez-Postigo, A., Xu, D.X.: Mapping the global design space of
  nanophotonic components using machine learning pattern recognition. Nature
  communications  \textbf{10}(1), ~1--9 (2019)

\bibitem{melati2020design}
Melati, D., Grinberg, Y., Dezfouli, M.K., Schmid, J.H., Cheben, P., Janz, S.,
  Cheriton, R., S{\'a}nchez-Postigo, A., Pond, J., Niegemann, J., et~al.:
  Design of multi-parameter photonic devices using machine learning pattern
  recognition. In: Integrated Photonics Platforms: Fundamental Research,
  Manufacturing and Applications. vol. 11364, p. 1136408. International Society
  for Optics and Photonics (2020)

\bibitem{pca}
Pearson, K.: Liii. on lines and planes of closest fit to systems of points in
  space. The London, Edinburgh, and Dublin Philosophical Magazine and Journal
  of Science  \textbf{2}(11),  559--572 (1901)

\bibitem{scikit-learn}
Pedregosa, F., Varoquaux, G., Gramfort, A., Michel, V., Thirion, B., Grisel,
  O., Blondel, M., Prettenhofer, P., Weiss, R., Dubourg, V., Vanderplas, J.,
  Passos, A., Cournapeau, D., Brucher, M., Perrot, M., Duchesnay, E.:
  Scikit-learn: Machine learning in {P}ython. Journal of Machine Learning
  Research  \textbf{12},  2825--2830 (2011)

\bibitem{roweis2000nonlinear}
Roweis, S.T., Saul, L.K.: Nonlinear dimensionality reduction by locally linear
  embedding. science  \textbf{290}(5500),  2323--2326 (2000)

\bibitem{scholkopf1997kernel}
Sch{\"o}lkopf, B., Smola, A., M{\"u}ller, K.R.: Kernel principal component
  analysis. In: International conference on artificial neural networks. pp.
  583--588. Springer (1997)

\bibitem{seuret}
{Seuret}, M., {Alberti}, M., {Liwicki}, M., {Ingold}, R.: Pca-initialized deep
  neural networks applied to document image analysis. In: 2017 14th IAPR
  International Conference on Document Analysis and Recognition (ICDAR).
  vol.~01, pp. 877--882 (2017). \doi{10.1109/ICDAR.2017.148}

\bibitem{tenenbaum2000global}
Tenenbaum, J.B., Silva, V.d., Langford, J.C.: A global geometric framework for
  nonlinear dimensionality reduction. science  \textbf{290}(5500),  2319--2323
  (2000)

\bibitem{tipping1999probabilistic}
Tipping, M.E., Bishop, C.M.: Probabilistic principal component analysis.
  Journal of the Royal Statistical Society: Series B (Statistical Methodology)
  \textbf{61}(3),  611--622 (1999)

\end{thebibliography}

\end{document}